\newtheorem{lemma}{Lemma}
\newtheorem{proof}{Proof}
\title{New Interpretations of Normalization Methods in Deep Learning}
\author{Jiacheng Sun\textsuperscript{\rm 1}, Xiangyong Cao\textsuperscript{\rm 2}, Hanwen Liang\textsuperscript{\rm 1}, Weiran Huang\textsuperscript{\rm 1}, Zewei Chen\textsuperscript{\rm 1}, Zhenguo Li\textsuperscript{\rm 1}\\ 
\textsuperscript{\rm 1}Huawei Noah Ark's Lab, \textsuperscript{\rm 2}Xi'an Jiaotong University\\
\textrm{\{sunjiacheng1, lianghanwen1, weiran.huang, chen.zewei, Li.Zhenguo\}@huawei.com}, \textrm{caoxiangyong@mail.xjtu.edu.cn} 
}
\begin{document}
\maketitle

\begin{abstract}
In recent years, a variety of normalization methods have been proposed to help train neural networks, such as batch normalization (BN), layer normalization (LN), weight normalization (WN), group normalization (GN), etc. However, mathematical tools to analyze all these normalization methods are lacking. In this paper, we first propose a lemma to define some necessary tools. Then, we use these tools to make a deep analysis on popular normalization methods and obtain the following conclusions: 1) Most of the normalization methods can be interpreted in a unified framework, namely normalizing pre-activations or weights onto a sphere; 2) Since most of the existing normalization methods are scaling invariant, we can conduct optimization on a sphere with scaling symmetry removed, which can help stabilize the training of network; 3) We prove that training with these normalization methods can make the norm of weights increase, which could cause adversarial vulnerability as it amplifies the attack. Finally, a series of experiments are conducted to verify these claims. 
\end{abstract}

\section{Introduction}
Normalization has been a very effective strategy in deep learning, which speeds up training deep neural networks and also acts as a regularizer to improve generalization. This technique has been a fundamental component in many state-of-the-art algorithms, and is usually implemented by adding non-linear mappings to pre-activations or weights before activation functions. In recent years, a variety of normalization methods have been proposed, including batch normalization (BN)~\cite{ioffe2015BN}, layer normalization (LN)~\cite{lei2016LN}, instance normalization (IN)~\cite{ulyanov2016IN}, group normalization (GN)~\cite{wu2018GN}, weight normalization (WN)~\cite{salimans2016WN}, centered weight normalization (CWN)~\cite{huang2017CWN},
spectral normalization (SN)~\cite{miyato2018SN}, etc. These methods can be roughly divided into two categories: \textit{data-based normalization} and \textit{weight-based normalization}. In the following, we will first review the two categories, and then introduce the motivation and contribution of our work.

\subsection{Related work: \textit{data-based normalization} vs \textit{weight-based normalization}}
\textit{Data-based normalization} is implemented by normalizing the pre-activations/activations~\cite{mishkin2015all}, which is computed by passing the data into each layer of the network. The typical methods include BN, LN, IN and GN. More specifically, BN standardizes the pre-activation vector by transforming it into a vector with zero mean and unit variance. LN implements this standardization within one layer, which is equivalent to first centering every column of weights and then conducting the multiplicity and scaling. Therefore, it is scaling invariant with both weights and data as well as moving invariant in a certain direction. IN implements the standardization in one channel of the pre-activation, while GN balances between LN and IN and is implemented by dividing the channels into groups. 

Comparatively, the other strategy \textit{weight-based normalization} normalizes the weights directly. The typical methods are WN, CWN and 
SN. Specifically, WN~\cite{salimans2016WN} decouples the length and direction of the weight vector as BN does. CWN~\cite{huang2017CWN} standardizes the weight vector by centering at mean and scaling by its $L_2$ norm and is equivalent to weight standardization (WS)~\cite{qiao2019WS}. SN is another normalization that helps stabilize the training of discriminator in the generative adversarial network (GAN), and it is similar to WN in dividing the spectral norm of the weight matrix. This method is implemented by controlling the Lipschitz constant not too big, so it can be much more robust to noise~\cite{miyato2018SN}.

\subsection{Motivations}
Although various normalization methods have been proposed and verified to be effective in many computer vision tasks, there still exist many issues. Firstly, all the existing normalization methods are heuristic and lack of a unified theoretical understanding. Therefore, it is very important to interpret all the normalization methods from a theoretical perspective. Secondly, many normalization methods are widely used and their performance is closely related to the batch size. Take BN for instance. The small batch size makes the network training rather unstable~\cite{wu2018GN}, while a large batch size may decrease the generalization capacity~\cite{luo2018towards}. Therefore, it is a fundamental problem to choose the optimal batch size for different tasks in deep learning~\cite{masters2018revisiting,park2019effect}. Besides, the scaling parameters $\gamma$ of BN have been adopted as criteria for pruning network~\cite{liu2017slim}, which is implemented by removing units whose scaling parameter $\gamma$ is small. This indicates that the parameters $\gamma$ in BN are very important in evaluating a channel/neuron. In this work, we denote that $\gamma$ and $\beta$ are the approximation of standard deviation and mean in a batch. If $\gamma$ of a channel is small, the output of this channel is approximated by a constant and contains little information for the next layer. Thus, it is reasonable to prune channels with small $\gamma$ as ~\cite{liu2017slim} does.
Thirdly, the scaling invariant property and symmetry of normalization methods need to be carefully studied since it can influence the stability of network training. 

\subsection{Contributions}
To tackle the aforementioned issues, we propose a corresponding scheme. More specifically, our contributions are three-fold: (1) we first propose a unified analysis tool for most of the existing normalization methods. Then, we make a deep analysis of normalization methods according to the proposed tools and point out the relationship between them; (2) we put forward some useful claims and conclusions. First, almost all the normalization methods can be interpreted as normalizing pre-activations or weights onto a sphere or ellipsoid; Second,  conducting optimization on a sphere with scaling symmetry removed could help to make the training of network more stable; (3) we show that training with these normalization methods will keep weights increasing, which will cause adversarial vulnerability since it will amplify the attack. 

The rest of the paper is organized as follows. In Section 2, we interpret different normalization methods in a unified analysis framework and propose some useful claims and conclusions. We conduct several experiments to verify these claims in Section 3 and conclude in Section 4. 



\section{Interpretations of Normalization Methods}
\label{headings}
As mentioned in the introduction section, a variety of normalization methods have been proposed in recent years~\cite{ioffe2015BN,ulyanov2016IN,wu2018GN,salimans2016WN,huang2017CWN,miyato2018SN}. In this section, we first give a unified analysis tool by proposing a lemma to exhibit the basic structure underlying these methods. Then, we make a deep analysis of each normalization method and put forward some useful claims and conclusions.

\begin{figure*}
	\centering
	\includegraphics[width=0.8\linewidth]{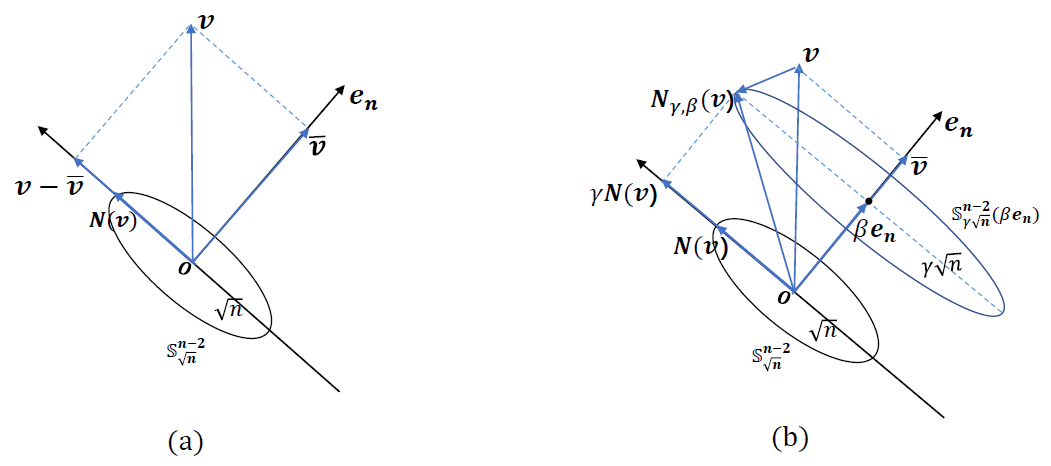}
	\caption{(a) The mean vector $\overline{v}$ of $v$ is its projection on $e_n$. $v-\overline{v}$ is orthogonal to $e_n$ and parallel to $N(v)\in \mathbb{S}^{n-2}_{\sqrt{n}}$. (b) Given $\gamma$ and $\beta$, $N_{\gamma,\beta}(v)$ maps each $v$ onto $\mathbb{S}^{n-2}_{\gamma \sqrt{n}}(\beta e_n)$, which implies that vector $N_{\gamma,\beta}(v)$ lies on a sphere with radius $\gamma \sqrt{n}$ and center $\beta e_n$.} \label{fig1}
\end{figure*}

\subsection{A Unified Analysis Tool}
Before analyzing each normalization method, we first propose a lemma to define the related notations used throughout the paper.
\begin{lemma}\label{unit_tool}
Given a vector $v=(v_1,v_2,...,v_n)^T \in \mathbb{R}^n$, \\
1) its mean vector $\overline{v}=\frac{1}{n}e_n^Tve_n=\frac{1}{n}e_ne_n^Tv$, where direction $e_n=(1,1,...,1)^T \in \mathbb{R}^n$;\\
2) the centered vector $v-\overline{v}=P_{e_n}v$, where $P_{e_n}= I_n-\frac{1}{n}e_ne_n^T$ is the projection to the hyperplane with direction $e_n$. $P_{e_n}$ has some properties, such as $P_{e_n}^2=P_{e_n}$, and $P_{e_n}v \perp e_n$;\\
3) its variance $\sigma_v^2=\frac{1}{n}(v-\overline{v})^T(v-\overline{v})=\frac{1}{n}\|v-\overline{v}\|_2^2=\frac{1}{n}v^TP_{e_n}v$, thus $\sigma_v=\frac{1}{\sqrt{n}}\|v-\overline{v}\|_2$;\\
4) the standardization
    \begin{equation}\label{standardization}
    N(v)=\frac{v-\overline{v}}{\sigma_v}=\sqrt{n}\frac{v-\overline{v}}{\|v-\overline{v}\|_2}=\frac{\sqrt{n}}{\|P_{e_n}v\|_2}P_{e_n}v, 
    \end{equation}
    and the norm is 
    \begin{equation}
        \|N(v)\|_2=\sqrt{n}
    \end{equation}
    i.e. $N(v) \in \mathbb{S}^{n-2}_{\sqrt{n}}$, where $\mathbb{S}^{n-2}_{\sqrt{n}}$ represents the $n-2$ dimension sphere with radius $\sqrt{n}$ and center 0 (omitted when center at origin); Also, $N(v)$ is scaling and moving invariant, i.e. $N(\alpha v+ te_n)=N(v), \forall \alpha,t \in \mathbb{R}$;\\
5) we have the orthogonal decomposition
\begin{equation}\label{orthogonal}
v= \gamma N(v) + \beta e_n, \\ 
\end{equation}
where \ $\gamma=\sigma_v=\frac{\|v-\overline{v}\|_2}{\sqrt{n}}, \beta=\frac{1}{n}e_n^Tv.$ The norm is \\
\begin{align}
\label{norm}
\|v\|_2^2=\|v-\overline{v}\|_2^2+\|\overline{v}\|_2^2  
&=\gamma^2\|N(v)\|_2^2+n\beta^2  \nonumber \\
&=n(\gamma^2+\beta^2).
\end{align}
If $\gamma$ and $\beta$ are fixed, $\gamma N(v) + \beta e_n \in \mathbb{S}^{n-2}_{\gamma \sqrt{n}}(\beta e_n)$. 
\end{lemma}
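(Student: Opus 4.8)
The plan is to establish the five parts in order, each by a direct computation; the entire lemma is elementary linear algebra organized around the rank-one matrix $e_ne_n^T$ and the single identity $e_n^Te_n=n$. For (1) I would note that $e_n^Tv=\sum_{i=1}^n v_i$ is a scalar, so $\frac1n(e_n^Tv)e_n$ is the vector all of whose entries equal the arithmetic mean of $v$, i.e.\ $\overline v$; and since $e_n^Tv$ is scalar it commutes past $e_n$, giving $\frac1n e_n e_n^T v$. Substituting into $v-\overline v$ immediately yields $v-\overline v=(I_n-\frac1n e_ne_n^T)v=P_{e_n}v$, which is (2). The idempotence $P_{e_n}^2=P_{e_n}$ follows by expanding the square and collapsing $\frac1{n^2}e_ne_n^Te_ne_n^T=\frac1n e_ne_n^T$ via $e_n^Te_n=n$; the orthogonality $P_{e_n}v\perp e_n$ follows from $e_n^TP_{e_n}v=e_n^Tv-\frac1n(e_n^Te_n)(e_n^Tv)=0$. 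I will also record that $P_{e_n}$ is symmetric, since that is needed next.

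For (3), the first equality is just the empirical variance written in vector form, $\frac1n\sum_i(v_i-\overline v_i)^2=\frac1n(v-\overline v)^T(v-\overline v)$; the second is the definition of the Euclidean norm; and the third uses (2) together with symmetry and idempotence: $(P_{e_n}v)^T(P_{e_n}v)=v^TP_{e_n}^TP_{e_n}v=v^TP_{e_n}v$. Taking positive square roots gives $\sigma_v=\frac1{\sqrt n}\|v-\overline v\|_2$. For (4) I substitute this expression for $\sigma_v$ into $N(v)=(v-\overline v)/\sigma_v$ to get the three stated forms, and then $\|N(v)\|_2=\sqrt n\,\|v-\overline v\|_2/\|v-\overline v\|_2=\sqrt n$; since $N(v)$ is a positive multiple of $P_{e_n}v$ it lies in the hyperplane $e_n^{\perp}$ (dimension $n-1$), and having fixed norm it lies on an $(n-2)$-sphere inside that hyperplane, i.e.\ $N(v)\in\mathbb{S}^{n-2}_{\sqrt n}$. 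For the invariance I compute $\overline{\alpha v+te_n}=\alpha\overline v+te_n$ (using $\overline{e_n}=e_n$), so the centered vector is $\alpha(v-\overline v)$ with standard deviation $|\alpha|\sigma_v$, whence $N(\alpha v+te_n)=\operatorname{sign}(\alpha)\,N(v)$; here I would flag that the stated identity $N(\alpha v+te_n)=N(v)$ holds for $\alpha>0$ (the case relevant to normalization), or under the convention that $\sigma_v$ is taken to carry the sign of $\alpha$.

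Finally, for (5) I compute $\gamma N(v)=\sigma_v\cdot(v-\overline v)/\sigma_v=v-\overline v$ and $\beta e_n=\frac1n(e_n^Tv)e_n=\overline v$, so $\gamma N(v)+\beta e_n=(v-\overline v)+\overline v=v$, and the two summands are orthogonal because $v-\overline v=P_{e_n}v\perp e_n\parallel\overline v$. The Pythagorean theorem then gives $\|v\|_2^2=\|v-\overline v\|_2^2+\|\overline v\|_2^2=\gamma^2\|N(v)\|_2^2+n\beta^2=n(\gamma^2+\beta^2)$, using $\|N(v)\|_2^2=n$ and $\|e_n\|_2^2=n$. For the last sentence, with $\gamma,\beta$ fixed the point $\gamma N(v)+\beta e_n$ is the fixed vector $\beta e_n$ plus a vector of fixed norm $\gamma\sqrt n$ lying in $e_n^{\perp}$, so it ranges over the $(n-2)$-sphere of radius $\gamma\sqrt n$ centered at $\beta e_n$, namely $\mathbb{S}^{n-2}_{\gamma\sqrt n}(\beta e_n)$.

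I do not expect a genuine obstacle: the lemma is a chain of routine identities. The only points needing care are the repeated appeal to $e_n^Te_n=n$ when manipulating $e_ne_n^T$, the bookkeeping of the absolute value in the scaling-invariance claim, and the dimension count ($n-2$ rather than $n-1$), which is the effect of intersecting a hyperplane with a sphere.
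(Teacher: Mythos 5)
Your proof is correct and follows essentially the same route as the paper's: direct computation organized around $e_ne_n^T$ and $e_n^Te_n=n$, idempotence and orthogonality of $P_{e_n}$, and the Pythagorean decomposition. Your flag about the scaling invariance is a genuine (minor) improvement: the paper's computation silently cancels $\alpha/\|\alpha(v-\overline v)\|_2$ as if $\alpha>0$, whereas in fact $N(\alpha v+te_n)=\operatorname{sign}(\alpha)N(v)$, so the claim as stated for all $\alpha\in\mathbb{R}$ needs the restriction you note (and $\alpha\neq 0$).
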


\begin{proof}
1) The mean of n-dimension vector $v\in \mathbb{R}^n$ is
\begin{align}
    &\frac{1}{n}(v_1+v_2+...+v_n)\\ \nonumber
    &=\frac{1}{n}(1,1,...,1)\cdot(v_1,...,v_n)^T \\ \nonumber
    &=\frac{1}{n}e_n^Tv \in \mathbb{R}
\end{align}

Note that the last equality holds as the multiplicity of scalar and vector can be commutative. Thus the mean vector\\
\begin{align}\label{mean_verctor}
    \overline{v}&=\frac{1}{n}(v_1+v_2+...+v_n)e_n\\ \nonumber
    &=\frac{1}{n}(e_n^Tv)e_n=\frac{1}{n}e_n(e_n^Tv)=\frac{1}{n}e_ne_n^Tv
\end{align}

2) By (\ref{mean_verctor}), the centered vector\\
\begin{align}
    v-\overline{v}=v-\frac{1}{n}e_ne_n^Tv=(I_n-\frac{1}{n}e_ne_n^T)v=P_{e_n}v
\end{align}
Now we prove $P_{e_n}$ is a projection matrix, i.e., $P_{e_n}^2=P_{e_n}$.\\
\begin{align}
    P_{e_n}^2\nonumber
    &=(I_n-\frac{1}{n}e_ne_n^T)(I_n-\frac{1}{n}e_ne_n^T)\\ \nonumber
    &=I_n-\frac{2}{n}e_ne_n^T+\frac{1}{n^2}e_n(e_n^Te_n)e_n^T\\ \nonumber
    &=I_n-\frac{2}{n}e_ne_n^T+\frac{1}{n}e_ne_n^T\\ \nonumber
    &=I_n-\frac{1}{n}e_ne_n^T=P_{e_n}
\end{align}
Note that the last equality holds as $e_n^Te_n=n$ by the definition of $e_n$.\\
Also we have \\
\begin{align}
    e_n^TP_{e_n}v\nonumber
    &=e_n^T(I_n-\frac{1}{n}e_ne_n^T)v\\ \nonumber
    &=(e_n^T-\frac{1}{n}(e_n^Te_n)e_n^T)v\\ \nonumber
    &=(e_n^T-e_n^T)v=0 
\end{align}
So $P_{e_n}v \perp e_n$, and $P_{e_n}$ is the projection operation onto $e_n^{\perp}$.\\
3) The variance of components in $v$ is
\begin{align}
    \sigma_v^2\nonumber
    &=\frac{1}{n}\sum_{i=1}^n(v_i-\frac{1}{n}e_n^Tv)\\ \nonumber
    &=\frac{1}{n}(v-\overline{v})^T(v-\overline{v})=\frac{1}{n}\|v-\overline{v}\|_2^2 \\ \nonumber
    &=\frac{1}{n}\|P_{e_n}v\|_2^2
\end{align}
Thus, $\sigma_v=\frac{1}{\sqrt{n}}\|v-\overline{v}\|_2=\frac{1}{n}\|P_{e_n}v\|_2$.\\
4) We only need to show that $N(v)$ is scaling and translation invariant by above results. $\forall \alpha,t \in \mathbb{R}$,
\begin{align}
    \nonumber
    &N(\alpha v+ te_n)\\ \nonumber
    &=\sqrt{n}\frac{(\alpha v+ te_n)-(\alpha \overline{v})+ te_n)}{\|(\alpha v+ te_n)-(\alpha \overline{v})+ te_n)\|_2}\\ \nonumber
    &=\sqrt{n}\frac{\alpha v-\alpha \overline{v}}{\|\alpha v-\alpha \overline{v}\|_2}=N(v)\nonumber
\end{align}
The norm of $N(v)=\sqrt{n}$ by definition, which means that $N(v)$ always lies on a sphere with radius $\sqrt{n}$ and center $0$ which is orthogonal to fixed direction $e_n$. Thus $N(v) \in \mathbb{S}^{n-2}_{\sqrt{n}}$. \\  
5) Now we prove the orthogonal decomposition of (\ref{orthogonal}).
\begin{align}
    v&=v-\overline{v}+\overline{v}\\ \nonumber
    &=\sigma_v\frac{v-\overline{v}}{\sigma_v}+\frac{1}{n}e_n^Tve_n\\ \nonumber
    &=\sigma_vN(v)+\frac{1}{n}e_n^Tve_n\\ \nonumber
    &=\gamma N(v) + \beta e_n
\end{align}
where  $\gamma=\sigma_v=\frac{\|v-\overline{v}\|_2}{\sqrt{n}}, \beta=\frac{1}{n}e_n^Tv.$\\
Also from 2) we learn that $N(v) \perp e_n$, thus we have\\
\begin{align}
    \|v\|_2^2&=\|v-\overline{v}\|_2^2+\|\overline{v}\|_2^2\\ \nonumber
    &=\gamma^2\|N(v)\|_2^2+n\beta^2\\ \nonumber
    &=n(\gamma^2+\beta^2)
\end{align}
Then for any vector we can composite it as two orthogonal direction $N(v)$ and $e_n$. If we fix the coefficients $\gamma,\beta$ by the definition of $N(v)$, $\gamma N(v) + \beta e_n \in \mathbb{S}^{n-2}_{\gamma \sqrt{n}}(\beta e_n)$.

\end{proof}

To better illustrate this lemma, a sketch map is shown in Fig.~\ref{fig1}. To sum up, this lemma gives us an overall picture of understanding normalization. Most of the normalization methods standardize data or weights. More specifically, the main idea is to decouple the length and direction and optimize them separably~\cite{kohler2019exponential}. Additionally, it can also be seen from the lemma that the standardized vectors lie on a sphere, and all directions are equilibrium on the sphere. Therefore, we can do optimization on a compact sphere instead of the whole parameter space. In the following, we will use this lemma to make a deep interpretation of some of the normalization methods.

\subsection{Batch Normalization}\label{batchnorm}
In this section, we make a deep analysis of batch normalization (BN) based on Lemma \ref{unit_tool}. For simplicity, we only analyze one layer in a multi-layer perceptron (MLP), while it is similar to analyze the convolution layer of CNN (more details can be seen in the Appendix of the supplemental material). There are also extensively works on batch normalization, for example, why it works~\cite{yang2019mean,cho2017riemannian,bjorck2018understanding,kohler2019exponential} and how to use it in concrete tasks~\cite{park2019effect}, while we focus on its geometric properties.

Before conducting the analysis, some notations are defined. $\{X^k\}_{k=1}^B, X^k \in \mathbb{R}^n, k=1,...,B$, denotes as a batch set with $B$ samples, and the input for one layer can be represented as $X=(X^1,...,X^B)\in \mathbb{R}^{n\times B}$. We assume that the layer of MLP contains $m$ hidden nodes, and thereby the weight matrix of this layer can be denoted as $W=(W_1,W_2,...,W_m)^T\in \mathbb{R}^{m\times n}$, where each row vector $W_i\in \mathbb{R}^n, i=1,...,m$. Therefore, the output $Y^k=(Y^{k}_{1},Y^{k}_{2},\dots,Y^{k}_{m})\in\mathbb{R}^m$ of this layer can be represented as $Y^k=WX^k$. Additionally, we also define an all-one vector $e_B=(1,1,...,1)^T\in \mathbb{R}^B$.

Based on Lemma \ref{unit_tool}, batch normalization for $Y^{k}_i$ can be expressed as
\begin{align}\label{BN_2}
    &BN_{\gamma,\beta}(Y^{k}_i)\\ \nonumber
    &=\gamma\sqrt{B}\frac{Y^{k}_i-\bar{Y}_i}{\|Y^{k}_i-\bar{Y}_i\|_2}+\beta e_B^T \in \mathbb{S}^{B-2}_{\gamma\sqrt{B}}(\beta e_B^T)
\end{align}
where $Y^{k}_i$ is the $i_{th}$ element of pre-activation vector $Y^{k}$, and $\bar{Y}_i$ is the mean of components for $Y^{k}_i$. Besides, $\|BN_{\gamma,\beta}(Y_i)\|_2=\sqrt{B(\gamma^2+\beta^2)}$. According to this new expression of BN, we make several discoveries. 

Firstly, since $BN_{\gamma,\beta}(Y_i)\in \mathbb{S}^{B-2}_{\gamma\sqrt{B}}(\beta e_B^T)$, training with BN can be interpreted as finding a fixed sphere, where the pre-activation vector $Y^{k}_i$ can be approximated well.

Secondly, the parameters $\gamma$ and $\beta$ are very important as they determine the sphere for pre-activation vectors. The only difference between units at training is the two parameters, and they can be treated as the characteristics of units. More concretely, units with similar $\gamma$ and $\beta$ have similar outputs and thus similar functions. Therefore, the performance of the network can still be maintained by removing some of the similar units. A similar idea has been applied to network pruning~\cite{liu2017slim} by removing some units with small $\gamma$. But in our view, this idea can be further relaxed based on our observation. Specifically, this idea can be expressed as that not only units with small $\gamma$, but also units with similar $\gamma$ and $\beta$ can be removed. We will leave this method as future work. 

Thirdly, it also should be noted that the order of data in a batch does not change the result, thus the direction $N(v)$ can be any permutation of the components. There are $B!$ possible directions in a batch in total, and these directions should have the same output.

Fourthly, the batch size is a very important hyperparameter for training with BN. That too-small batch size will not make the training stable and large batch size would decrease generalization as studied in~\cite{luo2018towards}. Batch normalization for $Y_i$ can be represented as
\begin{equation}\label{BN_weight}
BN(Y_i)=\frac{W_i(X-\bar{X})}{\sqrt{\frac{1}{B}\|W_i(X-\bar{X})\|_2^2}}=\frac{W_i(X-\bar{X})}{\sqrt{W_i\Sigma_XW_i^T}}
\end{equation}
where each row of $\bar{X}$ is the mean of row in $X$, $\bar{X}=\frac{1}{B}Xe_Be_B^T$, $\Sigma_X$ is the covariance
matrix of row vectors in $X$, and $\Sigma_X=\frac{1}{B}(X-\bar{X})(X-\bar{X})^T$. We denote the rank of $\Sigma_X$ as $rank(\Sigma_X)$, and $rank(\Sigma_{X})\leq min(n, B)$. Next, we will have a further study about this by rank analysis.
\begin{itemize}
    \item Let $K_{\Sigma_X}=\{W_i\in \mathbb{R}^n|W_i\Sigma_X=0\}$ be the kernel space of $\Sigma_X$. If $W_i\in K_{\Sigma_X}$, then $BN_{\gamma,\beta}(Y_i)=\beta e_B^T$, which means that the sphere collapses to a mean vector. In this case, the generalization of network will decrease as studied in~\cite{morcos2018importance}.
    \item If batch size $B\ll n$, the dimension of kernel $dim(K_{\Sigma_X})\geq n-B$ will be large. That is to say
    $W_i$ will lie in or very close to $K_{\Sigma_X}$ with very high probability. For example, in the extreme case where $B=2$, $BN(Y_i)=(-1,1)$ or $BN(Y_i)=(1,-1)$.
    \item If batch size $B\geq n$, this setting may avoid the above situation to some extent, but in real application, we cannot have a very big batch size due to the memory constraint. Also large batch would decrease generalization as studied in~\cite{luo2018towards}. Thus, choosing an appropriate batch size is an important question. 
\end{itemize}
  
Additionally, from Eq.~(\ref{BN_weight}), we can see that BN puts constraint on row vectors of weight matrix $W$, namely
\begin{eqnarray}\label{bn_weight}
BN_{\gamma,\beta}(Y_i)&=&\frac{\gamma W_i}{\sqrt{W_i\Sigma_XW_i^T}}(X-\bar{X})+\beta e_B^T \nonumber\\
&=&W_i'X+b',
\end{eqnarray}
where $W_i'=\frac{\gamma W_i}{\sqrt{W_i\Sigma_XW_i^T}}$, and $b'=\beta e_B^T-\frac{\gamma W_i\bar{X}}{\sqrt{W_i\Sigma_XW_i^T}}$. Therefore, BN is scaling invariant for both weights $W$ and inputs $X$, also invariant with bias with row vectors of $X$. We can see that $W_i'$ lies on a $rank(\Sigma)$ dimension ellipsoid~\cite{cho2017riemannian}~\cite{kohler2019exponential}, and the ellipsoid converges as the training converges. Thus, batch normalization constrains our effective weight $W_i'$ on an ellipsoid which is a compact set. We should notice that the ellipsoid is changing over training steps, which means the units adjust the effective weights to achieve better performance.

\subsection{Layer Normalization}
In this section, we make a deep analysis of layer normalization based on Lemma \ref{unit_tool}. Specifically, layer normalization (LN)~\cite{lei2016LN} normalizes inside a layer, whose input is $X=(X_1,X_2,...,X_n)^T\in \mathbb{R}^n$, and the weight is $W \in \mathbb{R}^{m\times n}$. Thus $Y=(Y_1,Y_2,...,Y_m) \in \mathbb{R}^m$. Let $e_m=(1,1,...,1)^T\in \ \mathbb{R}^m$, then the mean of $Y$ is $\bar{Y}=\frac{1}{m}e_m\sum_{i=1}^{m}Y_i=\frac{1}{m}e_me_m^TY$. Layer normalization for $Y$ is
\begin{eqnarray}\label{ln_weight}
LN_{\gamma,\beta}(Y)&=&\gamma\sqrt{m}\frac{Y-\bar{Y}}{\|Y-\bar{Y}\|_2}+\beta e_m \nonumber
\\&=&\gamma\sqrt{m}\frac{(W-\bar{W})X}{\|(W-\bar{W})X\|_2}+\beta e_m
\end{eqnarray}
where $\bar{W}=\frac{1}{m}e_me_m^TW$. LN is scaling invariant for both weights $W$ and inputs $X$, and is moving invariant with row vectors of $W$. Every row of $\bar{W}$ is
\begin{align*}\label{ln_weight_mean}
\bar{W}_i=(\frac{1}{m}\sum_{k=1}^{m}W_k^1,\frac{1}{m}\sum_{k=1}^{m}W_k^2,...,\frac{1}{m}\sum_{k=1}^{m}W_k^n) \in \mathbb{R}^n 
\end{align*}
for $i=1,...,m.$ \nonumber  

Thus $\bar{W}_i$ is the mean of all the row vectors in $W$. It is easily observed from above Eq. (\ref{ln_weight}) that LN is equivalent to centering the row vectors of $W$ and then normalizing $(W-\bar{W})X$ on $\mathbb{S}^{m-2}_{\sqrt{m}}$ (more details can be found in Appendix).

\subsection{Weight-based Normalization}
In this section, we make a deep analysis of weight-based normalization methods. This kind of normalization constrains the effective weights on a sphere rather than ellipsoid as data-based normalization does. In this way, data-based normalization is much flexible to fit data and obtain good performance. It seems that both BN and LN do normalization to data, but they implicitly do the transformation of weights as characterized above. IN and GN follow the same way as LN, so we do not present the deduction here. First, weight normalization (WN)~\cite{salimans2016WN} is a direct way to normalize weight rather than inputs. It splits the weights' row vector as length and direction like BN and updates both of them in the training. WN takes the form as
\begin{eqnarray}
W_i=g\frac{V_i}{\|V_i\|_2}
\end{eqnarray}
where $g$ is the length of $W_i$ and $V_i$ is the direction of the row vector. We can see that if $\Sigma_{X}=Id$, then BN degenerates to WN. That is, if the covariant matrix is an identity or the vectors of the data matrix in every layer are independent, then BN will degenerate to WN. Of course, it is almost impossible to make all the layers independent in BN. BN performs better and also widely used in many tasks. One of the possible reasons is that it considers the relationship between different samples which shows as the direction of the vector in Fig 2.

Spectral normalization (SN)~\cite{miyato2018SN} was proposed to train the discriminator of GAN and has obtained better stability. The idea is to constrain the Lipschitz constant of the network by normalizing weight. Precisely,
\begin{eqnarray}
W_{SN}(W)=\frac{W}{\sigma(W)}
\end{eqnarray}
where $\sigma(W)=\max_{\|h\|_2=1}\|Wh\|_2$ is the spectral norm of $W$, namely the biggest singular value of $W$.

Centered weight normalization (CWN)~\cite{huang2017CWN} and weight standardization (WS)~\cite{qiao2019WS} both normalize weights' row vector by centering and scaling, 
\begin{eqnarray}
WS(W_i)&=&\frac{W_i-\bar{W}_i}{\sigma_{W_i}}=\sqrt{n}\frac{W_i-\bar{W}_i}{\|W_i-\bar{W}_i\|_2} \nonumber \\
&=&\sqrt{n}CWN(W_i)
\end{eqnarray}
where $\bar{W}_i=\frac{1}{n}W_ie_ne_n^T$ is mean of row vector $W_i$, and $e_n=(1,1,...,1)^T\in \ \mathbb{R}^n$. WS is equivalent to CWN which only normalizes by $L_2$ norm of a centered row vector. WS puts the row vector of weight on a sphere $\mathbb{S}^{n-2}_{\sqrt{n}}$, while CWN moves row of weight on unit sphere $\mathbb{S}^{n-2}_{1}$.

\begin{figure*}[h]
	\centering
	\includegraphics[width=0.85\linewidth]{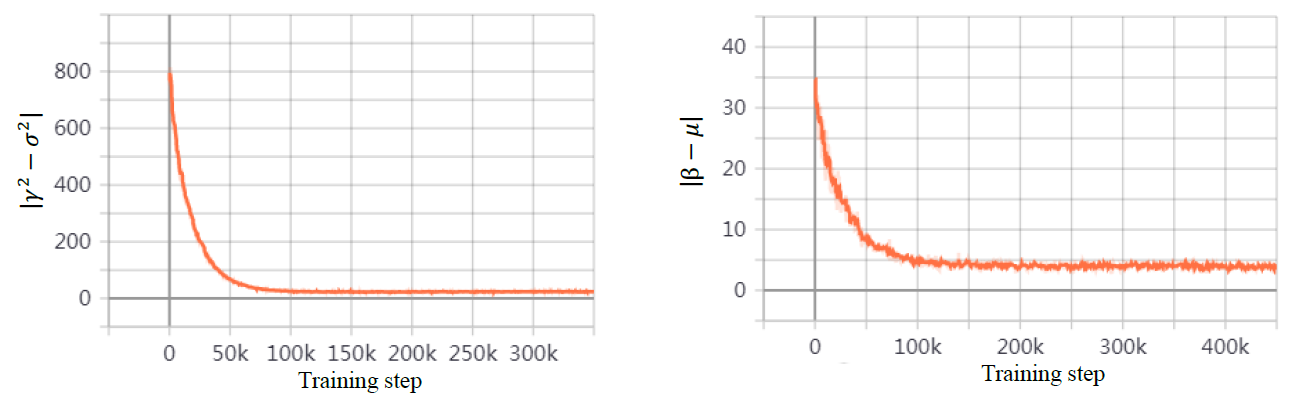}
	\caption{The change of $|\gamma^2-\sigma^2|$ and $|\beta-\mu|$ in training process.} 
	\label{fig2}
\end{figure*}

\begin{table*}[h]
    \centering
    \begin{tabular}{|c|c|c|c|c|c|c|c|c|c|}
    \hline
     Model & BN & BN+WD & GN & GN+WD & LN & LN+WD & WN & WN+WD \\
     \hline 
     Clean & 0.9687 & 0.9375 & 0.9687 & 0.9375 & 0.8711 &  0.9687 & 0.8679 & 0.9062\\
     Gaussian Noise & 0.8125 & 0.7812 & 0.9062 & 0.8437 & 0.8437 & 0.8002 & 0.7060 & 0.7187\\
     Acc-Diff1 & 0.1562 & 0.1563 & 0.0625 & 0.0938 & 0.0274 & 0.1685 & 0.1619 & 0.1875\\
     BIM-$l_\infty$ & 0.5380 & 0.6000 & 0.5793 & 0.5815 & 0.5253 & 0.5333 & 0.5031 & 0.4687\\
     Acc-Diff2 & 0.4307 & 0.3375 & 0.3894 & 0.3560 & 0.3458 & 0.4354 & 0.3648 & 0.4375\\
     \hline
    \end{tabular}
    \caption{Test accuracy of ResNet101 on CIFAR-10}
    \label{tab:test}
\end{table*}

\subsection{Symmetry of Normalization}
Most of the aforementioned normalization methods are scaling invariant, which will lead us to do optimization on the space by dividing the scaling symmetry \cite{cho2017riemannian, Meng2019GSGDOR}. To be more precise, the scaling invariant property allows us to pay more attention to the direction of weights instead of the magnitude. No matter how big or small the weights are, the effective weights don't change much. Here effective weights mean the weights multiplied by $X$, like $W_i'$ in Eq. (\ref{bn_weight}). This stables the training process and helps to obtain better performance than those without normalization.

We need another lemma from the symmetric group view to give us useful intuitions about normalization. In classical mechanics, symmetry is always related to invariant and is identified as Nother's Theorem~\cite{arnol2013mathematical}. In our case, because the weights are scaling invariant for many normalization methods, we just study the identity related to the scaling group. It tells us that the weight is always orthogonal to its gradient, thus the norm of weights will keep increasing with stochastic gradient descent (SGD).

\begin{lemma}\label{nother-theorem}
If $f(x)$ is a scaling invariant differentiable function on $\mathbb{R}^n$, i.e. $f(\lambda x)=f(x), \forall \lambda \in \mathbb{R}$, then
\begin{eqnarray}
 x^T\nabla f(x)\equiv 0. 
\end{eqnarray}
where $\nabla f(x)$ is the gradient at $x$. This implies that $x$ is always orthogonal to its gradient, $x \bot \nabla f(x)$.
\end{lemma}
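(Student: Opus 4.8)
The plan is to differentiate the scaling-invariance identity $f(\lambda x) = f(x)$ with respect to the scalar parameter $\lambda$ and then evaluate at a convenient value of $\lambda$. Define $g(\lambda) = f(\lambda x)$ for fixed $x \in \mathbb{R}^n$. By hypothesis $g$ is constant (equal to $f(x)$) on its domain, so $g'(\lambda) \equiv 0$. On the other hand, since $f$ is differentiable on $\mathbb{R}^n$, the chain rule gives $g'(\lambda) = \nabla f(\lambda x)^T \frac{d}{d\lambda}(\lambda x) = \nabla f(\lambda x)^T x$. Setting these equal yields $\nabla f(\lambda x)^T x = 0$ for all admissible $\lambda$.

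The second step is simply to specialize to $\lambda = 1$, which gives $\nabla f(x)^T x = 0$, i.e. $x^T \nabla f(x) \equiv 0$, the claimed identity; geometrically this says $x \perp \nabla f(x)$. One can then append the intended downstream remark: if $x$ is updated by a (stochastic) gradient step $x \leftarrow x - \eta\, \nabla f(x)$, then by the Pythagorean theorem $\|x - \eta \nabla f(x)\|_2^2 = \|x\|_2^2 + \eta^2 \|\nabla f(x)\|_2^2 \ge \|x\|_2^2$, so the norm of the weights is non-decreasing along SGD.

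The only real subtlety — and the main thing to be careful about — is the domain of $\lambda$. The statement writes $f(\lambda x) = f(x)$ for all $\lambda \in \mathbb{R}$, which would include $\lambda = 0$ and force $f$ to be globally constant; the intended reading (consistent with how normalization layers behave) is $\lambda \neq 0$, or at least $\lambda$ ranging over a neighborhood of $1$. As long as the invariance holds on an open interval of $\lambda$ containing $1$, the differentiation argument above is valid there and the conclusion at $\lambda = 1$ follows. A secondary point worth noting is that the identity presumes differentiability of $f$ at the point $x$ in question; the standardization maps $N(v)$ from Lemma~\ref{unit_tool} are differentiable wherever $P_{e_n}v \neq 0$, so the lemma applies at every weight configuration that does not collapse the pre-activation sphere to its center. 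No heavier machinery is needed — this is really just the infinitesimal (Noether-style) form of the scaling symmetry, exactly as the surrounding text advertises.
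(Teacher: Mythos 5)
Your proof is correct and follows exactly the same route as the paper's: differentiate $f(\lambda x)=f(x)$ in $\lambda$ via the chain rule and evaluate at $\lambda=1$. Your added caveat about restricting $\lambda$ to a neighborhood of $1$ (since $\lambda=0$ would force $f$ to be constant) is a legitimate refinement the paper glosses over, but it does not change the argument.
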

\begin{proof}
As $f(\lambda x)=f(x)$, differentiating with respect to $\lambda$ and evaluating at $\lambda=1$, we have 
\begin{eqnarray}
  0 &\equiv& \frac{d}{d\lambda}|_{\lambda=1}f(\lambda x)= \frac{df(\lambda x)}{d(\lambda x)}\frac{d(\lambda x)}{d\lambda} \nonumber \\
   &=& x^T\frac{df}{dx}(x)=x^T\nabla f(x). \nonumber
\end{eqnarray}
Thus the proof is completed.
\end{proof}

With the above setting, we just give an analysis on a neuron $i$ of one layer for simplicity, and the analysis applies to all the weights. Let the loss function be $\mathcal{L}(W)$. Since $\mathcal{L}(W)$ is scaling invariant for every row $W_i$ of $W$, i.e. $\mathcal{L}(\Lambda W)=\mathcal{L}(W)$, and for any diagonal matrix $\Lambda = diag(\lambda_1, \lambda_2,...,\lambda_m)\in \mathbb{R}^{mm}$. By Lemma \ref{nother-theorem}, we directly have
\begin{lemma}\label{norm_increase}
If loss function $\mathcal{L}(W)$ is scaling invariant with every row $W_i$ of $W$, then
\begin{eqnarray}
 W_i\perp \nabla_{W_i} \mathcal{L}(W).
\end{eqnarray}
Thus, for the SGD process, $W_i^{k+1}=W_i^k-\eta \nabla_{W_i} \mathcal{L}(W^k)$, the norm of $W_i^{k+1}$
\begin{eqnarray}
 \|W_i^{k+1}\|_2^2 = \|W_i^{k}\|_2^2 + \eta^2 \|\nabla_{W_i} \mathcal{L}(W)\|_2^2.
\end{eqnarray}
This implies that the norm of weights will keep increasing.
\end{lemma}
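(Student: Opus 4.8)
The plan is to obtain the lemma in two short moves from Lemma~\ref{nother-theorem}. First I would fix all rows of $W$ except the $i$-th at their current iterates $W_j^k$ ($j\neq i$) and regard the loss as a function of the single row, $g(u)=\mathcal{L}(W_1^k,\dots,W_{i-1}^k,u,W_{i+1}^k,\dots,W_m^k)$ for $u\in\mathbb{R}^n$. The standing hypothesis $\mathcal{L}(\Lambda W)=\mathcal{L}(W)$ for every diagonal $\Lambda$, specialized to $\Lambda=\mathrm{diag}(1,\dots,\lambda,\dots,1)$ with $\lambda$ in the $i$-th slot, says exactly that $g(\lambda u)=g(u)$ for all $\lambda$, i.e. $g$ is scaling invariant. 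Lemma~\ref{nother-theorem} then yields $u^T\nabla g(u)\equiv 0$; evaluating at $u=W_i^k$ and noting $\nabla g(W_i^k)=\nabla_{W_i}\mathcal{L}(W^k)$ gives the first assertion, $W_i^k\perp\nabla_{W_i}\mathcal{L}(W^k)$.

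Second, I would plug the SGD update into the squared norm and expand by the Pythagorean identity:
\begin{align*}
\|W_i^{k+1}\|_2^2
&=\bigl\|W_i^{k}-\eta\,\nabla_{W_i}\mathcal{L}(W^k)\bigr\|_2^2\\
&=\|W_i^{k}\|_2^2-2\eta\,(W_i^k)^{T}\nabla_{W_i}\mathcal{L}(W^k)+\eta^2\bigl\|\nabla_{W_i}\mathcal{L}(W^k)\bigr\|_2^2 .
\end{align*}
The cross term vanishes by the orthogonality just established, leaving $\|W_i^{k+1}\|_2^2=\|W_i^{k}\|_2^2+\eta^2\|\nabla_{W_i}\mathcal{L}(W^k)\|_2^2\ge\|W_i^{k}\|_2^2$, with strict increase whenever the gradient is nonzero. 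Iterating over $k$ shows the sequence of squared norms is nondecreasing along training, which is the claimed monotone growth.

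I do not expect a real obstacle here; the one place the argument must be handled with care is the bookkeeping in the first move: Lemma~\ref{nother-theorem} must be applied to the gradient \emph{with respect to $W_i$} taken at the actual iterate $W^k$ used by SGD, with the other rows frozen, not to a gradient evaluated at a rescaled configuration. Once one observes that freezing the remaining rows preserves scaling invariance in the $i$-th row, the identity applies verbatim and the rest is algebra. I would also append a one-line caveat that the equality is exact only for vanilla SGD: adding weight decay (the $+\mathrm{WD}$ variants in Table~\ref{tab:test}) introduces an extra $-\eta\lambda W_i^k$ term that breaks the orthogonality and can offset the growth, so the unbounded-norm phenomenon — and the resulting amplification of adversarial perturbations — is specific to the unregularized dynamics.
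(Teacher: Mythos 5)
Your proposal is correct and follows essentially the same route as the paper: the authors also derive the orthogonality $W_i\perp\nabla_{W_i}\mathcal{L}$ directly from Lemma~\ref{nother-theorem} applied to the per-row scaling invariance, and then the norm identity is the same Pythagorean expansion of the SGD update. Your explicit bookkeeping (freezing the other rows and specializing $\Lambda$ to a single-slot scaling) and the weight-decay caveat are sound refinements of what the paper leaves implicit.
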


All the aforementioned normalization methods, including BN, LN, IN, GN, WN, CWN, and WS, are scaling invariant. Therefore, training with these normalization methods will keep weights increasing. This was first observed in~\cite{salimans2016WN} for WN. Here we make a further step and obtain the same result for all the scaling invariant normalization methods, which is also found in~\cite{arora2018theoretical}. 

However, the increasing weight may cause adversarial vulnerability as it will amplify the attack. As mentioned in~\cite{galloway2019batch}, batch normalization is a cause of adversarial vulnerability. We conclude that the adversarial vulnerability exists in all the scaling invariant normalization methods, not limited to batch normalization, and weight decay can reduce the vulnerability to some extent~\cite{hoffer2018norm,zhang2018three}. In the experimental section, we conduct experiments to verify this point.

Though it is easy to get Lemma \ref{norm_increase} from Lemma \ref{nother-theorem}, we also prove it for batch normalization with very careful computation as~\cite{santurkar2018BN}(see details in Appendix 1.5). Also, it is obvious that we can get the term $\langle \nabla_{\mathbf{y}_j}\mathcal{L},\mathbf{\hat{y}}_j\rangle$ vanishing in~\cite{santurkar2018BN} with BN by Lemma \ref{nother-theorem}.

\section{Experiments}
In this section, we conduct a series of experiments to verify the claims of normalization methods induced by our proposed analysis tools. Specifically, we mainly focus on batch normalization (BN) method and adversarial vulnerability for scaling invariant normalization. The experiments are conducted on CIFAR-10 or CIFAR-100 dataset where images are normalized to zero mean and unit variance. Additionally, the training samples are also augmented by left-right flipping. Since ResNet~\cite{he2016deep} has been verified to achieve the state-of-the-art performance in the image classification task, we thus use the ResNet-101 as our baseline network in the experiment.

\begin{table*}[ht]
    \centering
    \begin{tabular}{|c|c|c|c|c|c|c|c|c|c|}
    \hline
     Model & BN & BN+WD & GN & GN+WD & LN & LN+WD & WN & WN+WD \\
     \hline 
     Clean & 0.6875 & 0.6563 & 0.6257 & 0.6063 & 0.6358 & 0.6279 & 0.6147 & 0.6072\\
     Gaussian Noise & 0.4596 & 0.4953 & 0.4098 & 4327 & 0.3983 & 0.4278 & 0.3476 & 0.3874\\
     Acc-Diff1 & 0.2279 & 0.1610 & 0.2159 & 0.1736 & 0.2375 & 0.2001 & 0.2671 & 0.2198\\
     BIM-$l_\infty$ & 0.1397 & 0.1558 & 0.2366 & 0.2813 & 0.2354 & 0.2531 & 0.1647 & 0.1965\\
     Acc-Diff2 & 0.5478 & 0.5005 & 0.3891 & 0.3250 & 0.4004 & 0.3748 & 0.4500 & 0.4077\\
     \hline
    \end{tabular}
    \caption{Test accuracy of ResNet-101 on CIFAR-100}
    \label{tab:test}
\end{table*}
\begin{figure*}[h]
	\centering
	\includegraphics[width=0.75\linewidth,height=0.6\linewidth]{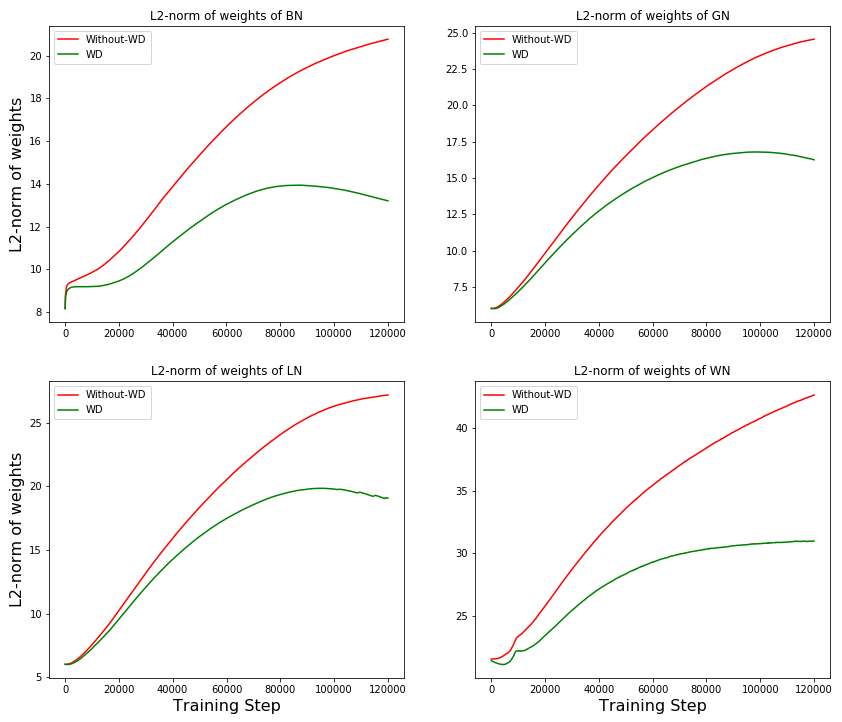}
	\caption{L2-norm of weights of different normalization methods (with or without WD) in training process.} 
	\label{weight-norm}
\end{figure*}
\subsection{Convergence of Parameters for BN}
Firstly, we conduct experiments to verify the claim that $\gamma^2$ is the approximation of variance $\sigma^2$ for units' pre-activation vector, and $\beta$ is the approximation of mean $\mu$ as shown in Section~\ref{batchnorm}. In other words, the claim indicates that the value of $|\gamma^2-\sigma^2|$ and $|\beta-\mu|$ should stabilize at a small value when the training converges.

Specifically, in this experiment, we train the ResNet-101 model on CIFAR-10 using the SGD algorithm with a learning rate of $10^{-3}$ and epoch number 200. Focusing on the first BN module in the first residual block, we show the change of $|\gamma^2-\sigma^2|$ and $|\beta-\mu|$ as iteration goes in Fig \ref{fig2}. There are 64 feature layers for BN module and each layer has its own $\gamma_{i},\sigma_{i},\beta_{i},\mu_{i}$. Since they have similar changing patterns, we randomly choose one pair of values to show in Fig \ref{fig2}. From Fig. \ref{fig2}, it can be easily seen that the parameters $\gamma$ and $\beta$ are of great importance in the training process. The convergence of $\gamma$ and $\beta$ is related to the convergence of training neural networks. The distance between $\gamma$ and $\sigma$ keeps stable at a small value after the model converges, so does the distance between $\beta$ and $\mu$. 

\subsection{Adversarial vulnerability for scaling invariant normalizations}
In this section, we show the relationship between adversarial vulnerability and scaling invariant normalization methods. As we claim in Lemma~\ref{norm_increase}, the norm of weights keeps increasing in the training process for scaling invariant normalization methods. We conduct experiments on CIFAR-10 and CIFAR-100 datasets using the ResNet-101 model with a learning rate of $10^{-3}$ and epoch number 200. 

Focusing on the first batch normalization module in the first residual block, we show the $L_2$-norm of weights of convolutional layers. The $L_2$-norm curves with BN, LN, GN and WN are shown in Fig.~\ref{weight-norm}. Additionally, we evaluate the robustness of convolutional networks with and without the weight decay strategy, which are trained under the same settings as before. Considering the simplicity and ability to degrade performance with little perceptible change to the input, we apply white-box adversarial attack-BIM and additive Gaussian noise to original clean data. The setting details are the same as \cite{galloway2019batch}. The test accuracy of the same model structure with different normalization methods are shown in Table~\ref{tab:test}, where "+WD" denotes whether weight decaying is added.

The weight norms of all the normalization methods we consider are increasing as training goes. By further adding the weight decay, the weight norm stabilizes early or even has a minor decrease. Also, from Table~\ref{tab:test} we can see that accuracy decreases with noise and BIM attacking in the same pattern. This implies that all these scaling invariant normalization methods have the same problem of adversarial vulnerability. One of the reasons for this problem is that increasing weight magnitude can amplify the noise or attack in the deep network. Additionally, we find that with weight decay the accuracy increases a bit than that without weight decay. That is because weight decay breaks the scaling invariant property so that the weight norm stabilizes early or even decreases, as shown in Fig.~\ref{weight-norm}.

\newpage
\section{Conclusion and Future work}
This paper makes a deep analysis on most of the normalization methods according to the unified mathematical analysis tool and then puts forward some useful claims and conclusions. More specifically, firstly, most normalization methods can be interpreted as normalizing pre-activations or weights onto a sphere. Secondly, a new network pruning strategy based on centering parameter and scaling parameter of BN is proposed, and that the batch size of BN should be set close to the width of the network is also claimed. We will explore these in our future works. Thirdly, optimization on module space which is scaling invariant can help stabilize the training of the network. Fourthly, we show that training with these normalization methods keeps weights increasing, which aggravates adversarial vulnerability since it will amplify the attack. A series of experiments have been conducted to verify the claims and conclusions. Much work still needs to explore this topic in the future.
\subsubsection*{Acknowledgments}
The second author was supported by the China Postdoctoral Science Foundation funded project (2018M643655), the Fundamental Research Funds for the Central Universities, and the China NSFC project under contract 61906151.

\medskip
\small
\bibliographystyle{aaai}
\bibliography{6872-mybibfile}
\newpage





\section{Appendix}
\subsection{Batch Normalization}
\subsubsection{BN for convolution layer}
We will use Einstein summation convention for the tensor computation. Let ${X_{i,j}^k}$ be the input tensor, where $i,j$ is the index of feature map and $k$ is the input channel, We define $Y_{p,q}^l$  to be the output tensor of filter ${w_k^{l,m,n}}$, and then $Y_{p,q}^l=w_k^{l,m,n}X_{p+m,q+n}^k$. Its mean over the batch is $\bar{Y}_{p,q}^l=w_k^{l,m,n}\bar{X}_{p+m,q+n}^k$, and the variance over batch is\\
$\sigma^2(Y^l)$\\
$=w_k^{l,m,n}(X_{p+m,q+n}^k-\bar{X}_{p+m,q+n}^k)w_a^{l,b,c}(X_{p+b,q+c}^a-\bar{X}_{p+b,q+c}^a)$ \\
$=w_k^{l,m,n}w_a^{l,b,c}(X_{p+m,q+n}^k-\bar{X}^k)(X_{p+b,q+c}^a-\bar{X}^a).$\\
Thus, batch normalization for convolution is
$BN_{\gamma,\beta}(Y_{p,q}^l)$\\
$=\gamma\frac{w_k^{l,m,n}(X_{p+m,q+n}^k-\bar{X}_{p+m,q+n}^k)}{[w_k^{l,m,n}w_a^{l,b,c}(X_{p+m,q+n}^k-\bar{X}^k)(X_{p+b,q+c}^a-\bar{X}^a)]^{\frac{1}{2}}}+\beta$

\subsubsection{Normalization to a sphere}

Batch normalization for $Y_i^k$ is
\begin{equation}\label{BN_1}
BN(Y_i^k)=\frac{Y_i^k-\bar{Y}_i}{\sigma_{Y_i}}=\frac{Y_i^k-\bar{Y}_i}{\sqrt{\frac{1}{B}\|Y_i-\bar{Y}_i\|_2^2}}=\sqrt{B}\frac{Y_i^k-\bar{Y}_i}{\|Y_i-\bar{Y}_i\|_2}
\end{equation}
where $Y_i$ is the $i_{th}$ pre-activation vector, $\bar{Y}_i$ is the mean of components for $Y_i$. By Lemma 1, BN normalizes every row of $Y$ on a sphere of radius $\sqrt{B}$ in $\mathbb{R}^B$, and $BN(Y_i)\perp e_B$
\begin{equation}\label{BN_sphere}
BN(Y_i)\in \mathbb{S}^{B-2}_{\sqrt{B}}, and \ \|BN(Y_i)\|_2=\sqrt{B}.
\end{equation}
Also, BN just gives a vector on $\mathbb{S}^{B-2}_{\sqrt{B}}$ for direction. If we want to have powerful representation, scaling and bias terms are needed,
\begin{equation}\label{BN_3}
BN_{\gamma,\beta}(Y_i)=\gamma\sqrt{B}\frac{Y_i-\bar{Y}_i}{\|Y_i-\bar{Y}_i\|_2}+\beta e_B^T
\end{equation}
$BN_{\gamma,\beta}(Y_i)\in \mathbb{S}^{B-2}_{\gamma\sqrt{B}}(\beta e_B^T)$ approximate $Y_i$ on both direction $e_B$ and $BN(Y_i^k)$, also $\|BN_{\gamma,\beta}(Y_i)\|_2=\sqrt{B(\gamma^2+\beta^2)}$.

\subsubsection{Analysis for covariance kernel}
We write $Y$ as $WX$ to find the connection with weights.
\begin{eqnarray*}\label{BN_weight}
BN(Y_i)&=&\frac{W_i(X-\bar{X})}{\sqrt{\frac{1}{B}\|W_i(X-\bar{X})\|_2^2}} \nonumber \\
&=&\frac{W_i(X-\bar{X})}{\sqrt{\frac{1}{B}W_i(X-\bar{X})(X-\bar{X})^TW_i^T}} \nonumber \\ 
&=&\frac{W_i(X-\bar{X})}{\sqrt{W_i\Sigma_XW_i^T}}
\end{eqnarray*}
where each row of $\bar{X}$ is the mean of row in $X$, $\bar{X}=\frac{1}{B}Xe_Be_B^T$. $\Sigma_X$ is the covariance
matrix of row vectors in $X$, $\Sigma_X=\frac{1}{B}(X-\bar{X})(X-\bar{X})^T$, $rank(\Sigma)\leq min(n, B)$. If $W_i\in K_{\Sigma_X}$, then $BN_{\gamma,\beta}(Y_i)=\beta e_B^T$.  \\
For small batch size $B<<n$, the dimension of kernel $dim(K_{\Sigma_X})\geq n-B$ will be large and $W_i$ will lie in or very close to $K_{\Sigma_X}$ with very high probability. For extremal case $B=2$,
\begin{eqnarray*}\label{BN_size2}
BN(Y_i)&=&\frac{(Y_i^1,Y_i^2)-\frac{1}{2}(Y_i^1+Y_i^2)(1,1)}{\sqrt{\frac{1}{2}(\|(Y_i^1,Y_i^2)-\frac{1}{2}(Y_i^1+Y_i^2)(1,1)\|_2^2)}}\nonumber \\
&=&\frac{\frac{1}{2}(Y_i^1-Y_i^2)(1,-1)}{\frac{1}{2}|Y_i^1-Y_i^2|}=(1,-1) \ or\ (-1,1)
\end{eqnarray*}
In this case, the gradient will vanish and cannot train stably. $B\geq n$ may avoid this to some extend, but we cannot have very big batch size for the hardware constrain. Also large batch would decrease generalization as studied in [9]. Thus, choosing an appropriate batch size is an important question. We conjecture that when the wide equals to the batch size will maximum the performance. 

\subsubsection{Transformation for weights}
From the equation above, we can see that BN put constrain on row vectors of weight matrix $W$,
\begin{equation}\label{bn_weight1}
BN_{\gamma,\beta}(Y_i)=\frac{\gamma W_i}{\sqrt{W_i\Sigma_XW_i^T}}(X-\bar{X})+\beta e_B^T=W_i'X+b'
\end{equation}
where $W_i'=\frac{\gamma W_i}{\sqrt{W_i\Sigma_XW_i^T}}$, $b'=\beta e_B^T-\frac{\gamma W_i\bar{X}}{\sqrt{W_i\Sigma_XW_i^T}}$.\\
BN is scaling invariant for both weights $W$ and inputs $X$, also invariant with bias with row vectors of $X$. We can see that $W_i'$ lies on a $rank(\Sigma)$ dimension ellipsoid~\cite{cho2017riemannian}~\cite{kohler2019exponential}, and the ellipsoid converges as the training converge. Thus, batch normalization constrain our weight into a ellipsoid which is a compact set.

\subsection{Layer normalization}
Layer normalization~\cite{lei2016LN} normalize inside a layer, for a layer input $X=(X_1,X_2,...,X_n)^T\in \mathbb{R}^n$, the weight $W \in \mathbb{R}^{m\times n}$, so $Y=(Y_1,Y_2,...,Y_m) \in \mathbb{R}^m$. Let $e_m=(1,1,...,1)^T\in \ \mathbb{R}^m$, then the mean of $Y$ is $\bar{Y}=\frac{1}{m}e_m\sum_{i=1}^{m}Y_i=\frac{1}{m}e_me_m^TY$.
\subsubsection{Normalization to a sphere}
Layer normalization for $Y$ is
\begin{equation}\label{ln_1}
LN(Y)=\frac{Y-\bar{Y}}{\sigma_Y}=\sqrt{m}\frac{Y-\bar{Y}}{\|Y-\bar{Y}\|_2}
\end{equation}
By Lemma 1, LN normalizes the vector $Y$ on a sphere of radius $\sqrt{m}$ in $\mathbb{R}^m$, i.e. $LN(Y)\in \mathbb{S}^{m-2}_{\sqrt{m}}$. For powerful representation LN with scaling and bias is
\begin{equation}\label{ln_2}
LN_{\gamma,\beta}(Y)=\gamma\sqrt{m}\frac{Y-\bar{Y}}{\|Y-\bar{Y}\|_2}+\beta e_m
\end{equation}
$LN_{\gamma,\beta}$ transform the vectors on sphere $\mathbb{S}^{m-2}_{\sqrt{m}}$ to $\mathbb{S}^{m-2}_{\gamma \sqrt{m}}(\beta e_m)$, where $e_m=(1,1,...,1)^T\in \ \mathbb{R}^m$.
\subsubsection{Transformation for weights}
We can see what does LN do for the input $X$, as
\begin{eqnarray}\label{ln_weight}
LN(Y)&=&\sqrt{m}\frac{Y-\bar{Y}}{\|Y-\bar{Y}\|_2}=\sqrt{m}\frac{(I_m-\frac{1}{m}e_me_m^T)Y}{\|(I_m-\frac{1}{m}e_me_m^T)Y\|_2} \nonumber \\
  &=&\sqrt{m}\frac{(I_m-\frac{1}{m}e_me_m^T)WX}{\|(I_m-\frac{1}{m}e_me_m^T)WX\|_2} \nonumber \\
  &=&\sqrt{m}\frac{(W-\bar{W})X}{\|(W-\bar{W})X\|_2}
\end{eqnarray}
where $\bar{W}=\frac{1}{m}e_me_m^TW$. LN is scaling invariant for both weights $W$ and inputs $X$, also invariant with bias with row vectors of $W$. Every row of $\bar{W}$ is
\begin{equation}\label{ln_weight_mean}
\bar{W}_i=(\frac{1}{m}\sum_{k=1}^{m}W_k^1,\frac{1}{m}\sum_{k=1}^{m}W_k^2,...,\frac{1}{m}\sum_{k=1}^{m}W_k^n) \in \mathbb{R}^n 
\end{equation}
for i=1,...,m.So $\bar{W}_i$ is the mean of all the row vectors in $W$. It is clear from above equation that LN is equivalent to center the row vectors of $W$ and then normalize $(W-\bar{W})X$ on $\mathbb{S}^{m-2}_{\sqrt{m}}$.

\subsection{Proof of Lemma 2}
As $f(\lambda x)=f(x)$, differentiate with respect to $\lambda$ and evaluate at $\lambda=1$, then we have 
\begin{eqnarray}
  0 &\equiv& \frac{d}{d\lambda}|_{\lambda=1}f(\lambda x)= \frac{df(\lambda x)}{d(\lambda x)}\frac{d(\lambda x)}{d\lambda} \nonumber \\
   &=& x^T\frac{df}{dx}(x)=x^T\nabla f(x)
\end{eqnarray}

\subsection{Proof of $\langle \nabla_{\mathbf{y}_j}\mathcal{L},\mathbf{\hat{y}}_j\rangle$=0 in~\cite{santurkar2018BN} by direct computation}
We use the notation as~\cite{santurkar2018BN}, where $\mathbf{y}_j=W_j^i\mathbf{x}_i$, $\hat{\mathbf{y}}_j=\frac{\mathbf{y}_j-\bar{\mathbf{y}}_j}{\sigma_j}$, $\mathbf{z}_j=\gamma \hat{\mathbf{y}}_j+\beta$, from~\cite{santurkar2018BN} and the chain rule we know that
\begin{align}
    \nonumber
  &\frac{\partial{\hat{\mathbf{y}}_j}}{\partial {\mathbf{y}_j}} \\ \nonumber
  &= \frac{1}{\sigma_j}(I_m-\frac{1}{m}\mathbf{e}_m^T\mathbf{e}_m-\frac{1}{m}\hat{\mathbf{y}}_j^T\hat{\mathbf{y}}_j)  \\
   &= \frac{1}{\sigma_j}(I_m-\frac{1}{m}\mathbf{e}_m^T\mathbf{e}_m)(I_m-\frac{1}{m}\hat{\mathbf{y}}_j^T\hat{\mathbf{y}}_j)\ \  as \ \mathbf{e}_m\bot\hat{\mathbf{y}}_j
\end{align} 
\begin{eqnarray*}
  \frac{\partial{\hat{\mathbf{y}}_j}}{\partial{\mathbf{w}_j^l}}=\frac{\partial{\hat{\mathbf{y}}_j}}{\partial{\mathbf{y}_j^m}}\mathbf{x}_l^m
\end{eqnarray*}
\begin{equation}\label{grady}
  \frac{\partial\mathcal{L}}{\partial {\mathbf{y}_j}}=\frac{\gamma}{\sigma_j}\frac{\partial{\mathcal{L}}}{\partial{\mathbf{z}_j}}(I_m-\frac{1}{m}\mathbf{e}_m^T\mathbf{e}_m)(I_m-\frac{1}{m}\hat{\mathbf{y}}_j^T\hat{\mathbf{y}}_j)
\end{equation}
\begin{equation}\label{gradw}
  \frac{\partial\mathcal{L}}{\partial {\mathbf{w}_j}}=\frac{\gamma}{\sigma_j}\frac{\partial{\mathcal{L}}}{\partial{\mathbf{z}_j}}(I_m-\frac{1}{m}\mathbf{e}_m^T\mathbf{e}_m)(I_m-\frac{1}{m}\hat{\mathbf{y}}_j^T\hat{\mathbf{y}}_j)\mathbf{X}^T
\end{equation}
\begin{align}
    \nonumber
    &{\langle\frac{\partial\mathcal{L}}{\partial {\mathbf{y}_j}},\hat{\mathbf{y}}_j}\rangle \\ \nonumber
    &= \frac{\gamma}{\sigma_j}\frac{\partial{\mathcal{L}}}{\partial{\mathbf{z}_j}}(I_m-\frac{1}{m}\mathbf{e}_m^T\mathbf{e}_m)(I_m-\frac{1}{m}\hat{\mathbf{y}}_j^T\hat{\mathbf{y}}_j)\hat{\mathbf{y}}_j^T \nonumber \\
   &= \frac{\gamma}{\sigma_j}\frac{\partial{\mathcal{L}}}{\partial{\mathbf{z}_j}}(I_m-\frac{1}{m}\mathbf{e}_m^T\mathbf{e}_m)(\hat{\mathbf{y}}_j^T-\hat{\mathbf{y}}_j^T(\frac{1}{m}\hat{\mathbf{y}}_j\hat{\mathbf{y}}_j^T)) \nonumber \\
   &= \frac{\gamma}{\sigma_j}\frac{\partial{\mathcal{L}}}{\partial{\mathbf{z}_j}}(I_m-\frac{1}{m}\mathbf{e}_m^T\mathbf{e}_m)(\hat{\mathbf{y}}_j^T-\hat{\mathbf{y}}_j^T)=0
\end{align}

\subsection{Proof of Lemma 3 for batch normalization by direct computation}

\begin{align}
  \nonumber
  &\langle\nabla_{\mathbf{w}_j}\mathcal{L},\mathbf{w}_j\rangle \\ \nonumber
  &= \frac\gamma{\sigma_j}\frac{\partial\mathcal{L}}{\partial z_j}({I_m}-\frac1m\mathbf{e}_m^T\mathbf{e}_m))({I_m}-\frac1m\mathbf{\hat{y}}_j^T\mathbf{\hat{y}}_j)X^T\mathbf{w}_j^T\nonumber \\
   &= \frac\gamma{\sigma_j}\frac{\partial\mathcal{L}}{\partial z_j}[\mathbf{w}_jX({I_m}-\frac1m\mathbf{\hat{y}}_j^T\mathbf{\hat{y}}_j)({I_m}-\frac1m\mathbf{e}_m^T\mathbf{e}_m))]^T \nonumber\\
   &= \frac\gamma{\sigma_j}\frac{\partial\mathcal{L}}{\partial z_j}[\mathbf{y}_j({I_m}-\frac1m\mathbf{\hat{y}}_j^T\mathbf{\hat{y}}_j)({I_m}-\frac1m\mathbf{e}_m^T\mathbf{e}_m))]^T \nonumber \\
   &= \frac\gamma{\sigma_j}\frac{\partial \mathcal{L}}{\partial z_j}[(\mathbf{y}_j-\frac1m\mathbf{y}_j\mathbf{\hat{y}}_j^T\mathbf{\hat{y}}_j)({I_m}-\frac1m\mathbf{e}_m^T\mathbf{e}_m))]^T \nonumber \\
   &= \frac\gamma{\sigma_j}\frac{\partial\mathcal{L}}{\partial z_j}[(\mathbf{y}_j-\frac1m(\sigma_j\mathbf{\hat{y}}_j+\bar{y}_j)\mathbf{\hat{y}}_j^T\mathbf{\hat{y}}_j)({I_m}-\frac1m\mathbf{e}_m^T\mathbf{e}_m))]^T \nonumber \\
   &= \frac\gamma{\sigma_j}\frac{\partial \mathcal{L}}{\partial z_j}[(\mathbf{y}_j-\sigma_j\mathbf{\hat{y}}_j)({I_m}-\frac1m\mathbf{e}_m^T\mathbf{e}_m)]^T \nonumber \\
   &= \frac\gamma{\sigma_j}\frac{\partial \mathcal{L}}{\partial z_j}[\mathbf{\bar{y}}_j({I_m}-\frac1m\mathbf{e}_m^T\mathbf{e}_m)]^T=0 
\end{align}

\medskip




\end{document}